\documentclass[11pt,a4paper]{article}

\usepackage[utf8]{inputenc}
\usepackage[T1]{fontenc}
\usepackage{amsmath,amssymb,amsthm}
\usepackage{mathtools}
\usepackage{algorithm}
\usepackage{algorithmic}
\usepackage{graphicx}
\usepackage{booktabs}
\usepackage{multirow}
\usepackage{hyperref}
\usepackage{xcolor}
\usepackage{enumitem}
\usepackage{tikz}
\usetikzlibrary{shapes,arrows,positioning,fit,backgrounds}
\usepackage{natbib}
\usepackage{caption}
\usepackage{subcaption}
\usepackage{float}
\usepackage{listings}
\usepackage{xspace}

\newtheorem{theorem}{Theorem}[section]

\newtheorem{proposition}[theorem]{Proposition}
\newtheorem{corollary}[theorem]{Corollary}
\newtheorem{definition}[theorem]{Definition}
\newtheorem{remark}[theorem]{Remark}

\newcommand{\R}{\mathbb{R}}

\newcommand{\softmax}{\operatorname{softmax}}
\newcommand{\topk}{\operatorname{top\text{-}k}}

\hypersetup{
    colorlinks=true,
    linkcolor=blue!70!black,
    citecolor=green!50!black,
    urlcolor=blue!70!black,
    pdfauthor={Saba Kublashvili},
    pdftitle={Virtual Parameter Sharpening: Dynamic Low-Rank Perturbations for Inference-Time Reasoning Enhancement},
}

\title{Virtual Parameter Sharpening: Dynamic Low-Rank Perturbations\\for Inference-Time Reasoning Enhancement}

\author{
    \textbf{Saba Kublashvili}\\
    Independent Researcher\\
    \texttt{GitHub: \href{https://github.com/Saba-Kublashvili/vps-virtual-parameter-synthesis}{Saba-Kublashvili/vps-virtual-parameter-synthesis}}
}

\date{}

\begin{document}

\maketitle

\begin{abstract}
We introduce \textbf{Virtual Parameter Sharpening (VPS)}, an inference-time technique that augments frozen transformer linear layers with dynamic, activation-conditioned low-rank perturbations. Unlike parameter-efficient fine-tuning methods such as LoRA, which learn static low-rank adapters, VPS constructs its perturbation factors on-the-fly from batch activation statistics and optional gradient signals, enabling test-time adaptation without persistent parameter updates. The perturbation takes the form $\Delta W = \gamma \cdot W^\top V U^\top W$, where selector matrices $U$ and $V$ are constructed via sparse activation-guided selection or Sylvester-coupled regression. We provide theoretical analysis of the perturbation's spectral properties and describe an adaptive policy system that modulates perturbation magnitude based on activation energy and token-level entropy. The system incorporates multi-objective verification with iterative refinement for tasks with ground-truth supervision. We present the complete algorithmic framework, analyze its mathematical foundations, and discuss the mechanisms by which activation-conditioned computation may enhance reasoning capabilities in large language models. Implementation and experimental code are available at \url{https://github.com/Saba-Kublashvili/vps-virtual-parameter-synthesis}.
\end{abstract}

\vspace{0.5cm}
\noindent\textbf{Keywords:} inference-time adaptation, low-rank perturbation, reasoning enhancement, large language models, dynamic computation

\section{Introduction}
\label{sec:introduction}

Large language models (LLMs) have demonstrated remarkable capabilities across diverse reasoning tasks \citep{brown2020language, wei2022chain, kojima2022large}. However, deploying these models effectively often requires task-specific adaptation, traditionally achieved through fine-tuning or prompt engineering. Parameter-efficient fine-tuning methods such as LoRA \citep{hu2021lora} and adapters \citep{houlsby2019parameter} have emerged as practical alternatives to full fine-tuning, learning compact modifications to frozen base weights. These approaches, while effective, still require a training phase with task-specific data and produce static parameter modifications.

A complementary research direction explores \emph{inference-time} adaptation, where model behavior is modified dynamically during generation without persistent parameter changes. Test-time training \citep{sun2020test}, in-context learning \citep{brown2020language}, and self-consistency decoding \citep{wang2023selfconsistency} represent various approaches to this paradigm. The appeal of inference-time methods lies in their flexibility: the same base model can exhibit different behaviors for different inputs without maintaining separate parameter checkpoints.

In this work, we introduce \textbf{Virtual Parameter Sharpening (VPS)}, an inference-time technique that augments transformer linear layers with dynamic low-rank perturbations constructed from activation statistics. The key distinguishing features of VPS are:

\begin{enumerate}[leftmargin=*,itemsep=2pt]
    \item \textbf{Weight-dependent perturbation structure}: Unlike additive LoRA adapters ($W + BA$), VPS constructs perturbations of the form $\Delta W \propto W U V^\top W$, ensuring the modification respects the geometry of the existing weight space.
    
    \item \textbf{Activation-guided sparsity}: Selector matrices $U$ and $V$ are constructed by identifying high-activation input and output dimensions, focusing perturbation capacity on task-relevant feature subspaces.
    
    \item \textbf{Adaptive policy system}: Perturbation magnitude and rank are modulated based on activation energy and token-level entropy, providing input-dependent computation depth.
    
    \item \textbf{Iterative verification refinement}: For tasks with ground-truth supervision, a multi-objective verifier provides feedback to the policy system, enabling iterative improvement across inference passes.
\end{enumerate}

The remainder of this paper is organized as follows. Section~\ref{sec:related} reviews related work on parameter-efficient adaptation and inference-time methods. Section~\ref{sec:method} presents the complete VPS framework, including mathematical formulation, builder algorithms, and the adaptive policy system. Section~\ref{sec:theory} provides theoretical analysis of spectral properties and perturbation bounds. Section~\ref{sec:implementation} describes implementation details and integration with transformer architectures. Section~\ref{sec:experiments} presents the experimental framework. Section~\ref{sec:discussion} discusses mechanisms and limitations. Section~\ref{sec:conclusion} concludes.

\section{Related Work}
\label{sec:related}

\subsection{Parameter-Efficient Fine-Tuning}

Parameter-efficient fine-tuning (PEFT) methods modify a small subset of model parameters while keeping the majority frozen. \textbf{Adapters} \citep{houlsby2019parameter, pfeiffer2020adapterfusion} insert small trainable modules between transformer layers. \textbf{LoRA} \citep{hu2021lora} parameterizes weight updates as low-rank matrices $\Delta W = BA$, where $B \in \R^{d_{out} \times r}$ and $A \in \R^{r \times d_{in}}$ with $r \ll \min(d_{in}, d_{out})$. Subsequent work has extended LoRA with adaptive rank allocation \citep{zhang2023adalora}, quantization-aware training \citep{dettmers2023qlora}, and dynamic composition \citep{liu2024dora}.

VPS differs from these methods fundamentally: PEFT methods learn static parameter modifications during a training phase, while VPS constructs perturbations dynamically at inference time without learned parameters. The perturbation structure also differs---VPS uses weight-dependent construction rather than independent additive factors.

\subsection{Test-Time Adaptation}

Test-time adaptation (TTA) modifies model behavior during inference using information from the test input itself. \citet{sun2020test} propose updating batch normalization statistics and minimizing entropy on test samples. \citet{wang2021tent} further develop entropy minimization for domain shift. In the context of language models, \citet{bansal2023rethinking} explore test-time training on the test input itself.

VPS shares the test-time paradigm but operates through activation-conditioned perturbations rather than explicit parameter updates. This avoids the computational cost of backpropagation through the full model for each input.

\subsection{Dynamic and Conditional Computation}

Dynamic networks modulate computation based on input characteristics. Mixture-of-experts (MoE) architectures \citep{shazeer2017outrageously, fedus2022switch} route tokens to different expert subnetworks. Early exit mechanisms \citep{schwartz2020right, schuster2022confident} allow variable computation depth. Hypernetworks \citep{ha2016hypernetworks} generate network weights conditioned on auxiliary inputs.

VPS implements a form of conditional computation where the perturbation applied to each linear layer depends on the activation statistics of the current batch. This provides per-input adaptation without the discrete routing decisions of MoE or the computational overhead of hypernetworks.

\subsection{Self-Consistency and Verification}

\citet{wang2023selfconsistency} demonstrate that sampling multiple reasoning paths and aggregating via majority voting improves performance on reasoning tasks. Subsequent work has explored more sophisticated aggregation \citep{li2023making} and iterative refinement \citep{madaan2023self, shinn2023reflexion}. VPS incorporates a verification mechanism that can provide feedback across inference iterations, implementing a form of iterative refinement through perturbation adjustment rather than explicit re-prompting.

\section{Method}
\label{sec:method}

\subsection{Problem Setting}

Consider a pre-trained transformer model with frozen parameters. For a linear layer with weight matrix $W \in \R^{d_{out} \times d_{in}}$, the standard forward pass computes $y = Wx$ for input $x \in \R^{d_{in}}$. Our goal is to construct a dynamic perturbation $\Delta: \R^{d_{in}} \to \R^{d_{out}}$ such that the modified computation $\tilde{y} = Wx + \Delta(x)$ improves task performance while:
\begin{enumerate}[leftmargin=*,itemsep=1pt]
    \item Maintaining computational efficiency (sublinear in $d_{in} \cdot d_{out}$)
    \item Preserving stability (bounded perturbation magnitude)
    \item Adapting to input characteristics (activation-conditioned)
\end{enumerate}

\subsection{VPS Forward Pass}
\label{sec:forward}

The VPS-augmented forward pass for a linear layer is:
\begin{equation}
\label{eq:forward}
\tilde{y} = Wx + \gamma \cdot (xA)B^\top
\end{equation}
where:
\begin{itemize}[leftmargin=*,itemsep=1pt]
    \item $A \in \R^{d_{in} \times r}$ and $B \in \R^{d_{out} \times r}$ are dynamically constructed low-rank factors
    \item $\gamma \in [0, 1]$ is a scaling coefficient (adaptive or fixed)
    \item $r \ll \min(d_{in}, d_{out})$ is the perturbation rank
\end{itemize}

The factors $A$ and $B$ are derived from selector matrices $U \in \R^{d_{in} \times r}$ and $V \in \R^{d_{out} \times r}$ via:
\begin{equation}
\label{eq:factors}
A = W^\top V, \quad B = WU
\end{equation}

Substituting~\eqref{eq:factors} into~\eqref{eq:forward} and rearranging:
\begin{align}
\tilde{y} &= Wx + \gamma \cdot x(W^\top V)(WU)^\top \nonumber \\
&= Wx + \gamma \cdot xW^\top V U^\top W^\top \nonumber \\
&= \left(W + \gamma \cdot (WUV^\top W)^\top\right) x \nonumber \\
&= \left(W + \gamma \cdot W^\top V U^\top W\right)^\top x
\label{eq:effective}
\end{align}

\begin{remark}[Weight-Dependent Structure]
The effective perturbation $\Delta W = \gamma \cdot W^\top V U^\top W$ is a function of the frozen weights $W$. This differs fundamentally from LoRA's additive structure $\Delta W = BA$ where $B, A$ are independent of $W$. The VPS perturbation operates within the column and row spaces of $W$, modifying existing weight structure rather than adding orthogonal components.
\end{remark}

\subsection{Selector Construction: Builder Algorithms}
\label{sec:builders}

The core question is how to construct $U$ and $V$. We present three builder algorithms with increasing sophistication.

\subsubsection{SK Builder: Sparse Selector via Activation Magnitude}

The SK (Sparse-$k$) builder constructs one-hot selector matrices by identifying high-activation dimensions.

\begin{algorithm}[H]
\caption{SK Builder}
\label{alg:sk}
\begin{algorithmic}[1]
\REQUIRE Batch activations $X \in \R^{N \times d_{in}}$, linear layer $W \in \R^{d_{out} \times d_{in}}$, parameters $k$, $r$
\STATE Compute input activation scores: $s^{(in)}_j = \frac{1}{N}\sum_{i=1}^N |X_{ij}|$ for $j \in [d_{in}]$
\STATE Compute output via base layer: $H = XW^\top \in \R^{N \times d_{out}}$
\STATE Compute output activation scores: $s^{(out)}_\ell = \frac{1}{N}\sum_{i=1}^N |H_{i\ell}|$ for $\ell \in [d_{out}]$
\STATE Select top-$k$ input indices: $\mathcal{I}_{in} = \topk(s^{(in)}, k)$
\STATE Select top-$k$ output indices: $\mathcal{I}_{out} = \topk(s^{(out)}, k)$
\STATE Initialize $U \in \R^{d_{in} \times r}$ and $V \in \R^{d_{out} \times r}$ as zero matrices
\FOR{$c = 1$ to $r$}
    \STATE $U_{\mathcal{I}_{in}[c], c} \gets 1$
    \STATE $V_{\mathcal{I}_{out}[c], c} \gets 1$
\ENDFOR
\RETURN $U$, $V$
\end{algorithmic}
\end{algorithm}

\textbf{Rationale}: By selecting dimensions with high activation magnitude, the SK builder focuses perturbation capacity on the feature subspace most active for the current input. Dimensions with near-zero activation contribute minimally to the output; perturbing them would waste capacity.

\textbf{Complexity}: The SK builder requires $O(Nd_{in} + Nd_{out})$ for activation computation and $O(d_{in} \log k + d_{out} \log k)$ for top-$k$ selection, which is linear in the batch size and layer dimensions.

\subsubsection{SC Builder: Sylvester-Coupled Refinement}

The SC (Sylvester-Coupled) builder refines the SK selection by solving a ridge regression that couples input activations to output activations within the selected subspaces.

\begin{algorithm}[H]
\caption{SC Builder}
\label{alg:sc}
\begin{algorithmic}[1]
\REQUIRE Batch activations $X$, linear layer $W$, parameters $k$, $r$, regularization $\alpha$
\STATE Obtain initial $U$, $V$ from SK Builder (Algorithm~\ref{alg:sk})
\STATE Extract selected input activations: $X_A = X[:, \mathcal{I}_{in}] \in \R^{N \times r}$
\STATE Extract selected output activations: $Y = H[:, \mathcal{I}_{out}] \in \R^{N \times r}$
\STATE Compute Gram matrix: $G = X_A^\top X_A \in \R^{r \times r}$
\STATE Compute cross-covariance: $C = X_A^\top Y \in \R^{r \times r}$
\STATE Solve ridge system: $T = (G + \alpha I)^{-1} C \in \R^{r \times r}$
\STATE Update output selector: $\tilde{V} = V T^\top$
\STATE Column-normalize: $\tilde{V}_{:,c} \gets \tilde{V}_{:,c} / \|\tilde{V}_{:,c}\|_2$ for each $c$
\RETURN $U$, $\tilde{V}$
\end{algorithmic}
\end{algorithm}

\textbf{Rationale}: The ridge regression finds a linear transformation $T$ that best predicts the selected output activations from the selected input activations:
\begin{equation}
T^* = \arg\min_T \|X_A T - Y\|_F^2 + \alpha \|T\|_F^2
\end{equation}
The solution $T^* = (X_A^\top X_A + \alpha I)^{-1} X_A^\top Y$ captures the local input-output coupling within the selected subspaces. Applying $T^\top$ to $V$ rotates the output selector to align with directions that are predictable from the input, potentially increasing the relevance of the perturbation.

\textbf{Complexity}: The additional cost is $O(Nr^2 + r^3)$ for forming the Gram matrix and solving the $r \times r$ linear system. For small $r$ (typically 2-8), this is negligible.

\subsubsection{Hybrid Builder}

The Hybrid builder adaptively selects between SK and SC based on available information:
\begin{equation}
\text{Builder}(X, H, W, g) = 
\begin{cases}
\text{SC}(X, H, W) & \text{if } g \neq \emptyset \\
\text{SK}(X, H, W) & \text{otherwise}
\end{cases}
\end{equation}
where $g$ denotes gradient information (if available from a verification loss). When gradient signals indicate an optimization context, the more sophisticated SC coupling is employed.

\subsection{Spectral Norm Control}
\label{sec:spectral}

To prevent the perturbation from destabilizing the forward pass, we apply per-component spectral clipping to the constructed factors.

\begin{definition}[Per-Component Clipping]
For factors $A \in \R^{d_{in} \times r}$ and $B \in \R^{d_{out} \times r}$ and threshold $\tau > 0$, define the clipping operation:
\begin{equation}
\text{Clip}_\tau(A, B) = (\tilde{A}, \tilde{B})
\end{equation}
where for each column $c \in [r]$:
\begin{equation}
\sigma_c = \|A_{:,c}\|_2 \cdot \|B_{:,c}\|_2, \quad
s_c = \max\left(1, \frac{\sigma_c}{\tau}\right)
\end{equation}
\begin{equation}
\tilde{A}_{:,c} = \frac{A_{:,c}}{\sqrt{s_c}}, \quad
\tilde{B}_{:,c} = \frac{B_{:,c}}{\sqrt{s_c}}
\end{equation}
\end{definition}

This ensures $\|\tilde{A}_{:,c}\|_2 \cdot \|\tilde{B}_{:,c}\|_2 \leq \tau$ for all $c$, bounding the spectral norm of each rank-1 component of the perturbation.

\subsection{Adaptive Policy System}
\label{sec:policy}

The policy system modulates perturbation hyperparameters based on input-dependent statistics.

\subsubsection{Activation Energy}

For hidden activations $H = XW^\top \in \R^{N \times d_{out}}$, define the batch energy:
\begin{equation}
E = \frac{1}{N \cdot d_{out}} \sum_{i,j} H_{ij}^2 = \frac{\|H\|_F^2}{N \cdot d_{out}}
\end{equation}

The energy is mapped to a scale factor via a saturating nonlinearity:
\begin{equation}
\sigma = 1 - e^{-E}
\end{equation}
This maps $E \in [0, \infty)$ to $\sigma \in [0, 1)$, with low energy producing small $\sigma$ (minimal perturbation) and high energy producing $\sigma \approx 1$ (full perturbation).

\subsubsection{Token Entropy}

For the output logits $z \in \R^{V}$ at a generation step (where $V$ is vocabulary size), define the token entropy:
\begin{equation}
\mathcal{H}(z) = -\sum_{v=1}^V p_v \log p_v, \quad p = \softmax(z)
\end{equation}

The scale factor is adjusted to incorporate entropy:
\begin{equation}
\sigma \gets \max\left(\sigma, \min\left(1, \frac{\mathcal{H}(z)}{3}\right)\right)
\end{equation}

\textbf{Rationale}: High entropy indicates model uncertainty---the probability mass is spread across many tokens. In uncertain states, increasing perturbation magnitude may help the model differentiate between candidates, analogous to raising temperature in simulated annealing to escape local optima.

\subsubsection{Adaptive Hyperparameters}

Given bounds $[r_{lo}, r_{hi}]$, $[\gamma_{lo}, \gamma_{hi}]$, etc., hyperparameters are interpolated:
\begin{align}
r &= r_{lo} + \lfloor(r_{hi} - r_{lo}) \cdot \sigma\rfloor \\
\gamma &= \gamma_{lo} + (\gamma_{hi} - \gamma_{lo}) \cdot \sigma
\end{align}

This provides smooth, input-dependent hyperparameter scheduling without discrete mode switching.

\subsubsection{Improvement History}

The policy tracks whether recent perturbations improved a verification metric:
\begin{equation}
\text{improved}_t = \mathbf{1}[\mathcal{L}_{t} < \mathcal{L}_{t-1}]
\end{equation}

The proportion of recent improvements influences future scaling:
\begin{equation}
\rho = \frac{1}{|\mathcal{W}|}\sum_{t \in \mathcal{W}} \text{improved}_t
\end{equation}
where $\mathcal{W}$ is a sliding window of recent iterations. High $\rho$ increases the scale factor; low $\rho$ decreases it, implementing a simple form of online adaptation.

\subsection{Multi-Objective Verification}
\label{sec:verification}

For tasks with ground-truth supervision, we employ a composite verifier that computes a weighted loss across multiple objectives.

\begin{definition}[Composite Verification Loss]
Given predicted text $\hat{y}$, ground truth $y^*$, and weights $\{w_i\}$:
\begin{equation}
\mathcal{L}(\hat{y}, y^*) = \sum_{i} w_i \cdot \ell_i(\hat{y}, y^*)
\end{equation}
\end{definition}

The implemented objectives are:

\textbf{Numeric Loss}: Extract numeric values from $\hat{y}$ and $y^*$ using pattern matching, compute squared error:
\begin{equation}
\ell_{num} = (\text{extract}(\hat{y}) - \text{extract}(y^*))^2
\end{equation}

\textbf{Unit Consistency Loss}: Parse quantities with units using dimensional analysis, penalize inconsistent dimensions:
\begin{equation}
\ell_{unit} = \mathbf{1}[\text{dim}(\hat{y}) \neq \text{dim}(y^*)]
\end{equation}

\textbf{Algebraic Form Loss}: Parse expressions symbolically, check structural equivalence:
\begin{equation}
\ell_{alg} = \mathbf{1}[\text{simplify}(\hat{y} - y^*) \neq 0]
\end{equation}

\textbf{Self-Consistency Loss}: Sample $n$ responses at temperature $T > 0$, compute variance:
\begin{equation}
\ell_{sc} = \frac{1}{n}\sum_{i=1}^n (\hat{y}_i - \bar{y})^2 / (\bar{y}^2 + \epsilon)
\end{equation}

The verification loss provides a signal for policy updates and enables iterative refinement across multiple inference passes.

\subsection{Iterative Refinement Loop}

Algorithm~\ref{alg:vps_iter} presents the complete VPS inference procedure with iterative refinement.

\begin{algorithm}[H]
\caption{VPS Inference with Iterative Refinement}
\label{alg:vps_iter}
\begin{algorithmic}[1]
\REQUIRE Prompt $p$, ground truth $y^*$ (optional), iterations $T$, model $\mathcal{M}$
\STATE Initialize policy state, $\mathcal{L}_{prev} \gets \infty$
\STATE $\hat{y}_0 \gets \text{Generate}(\mathcal{M}, p)$ \COMMENT{Baseline generation}
\IF{$y^* = \emptyset$ or $T < 2$}
    \RETURN $\hat{y}_0$
\ENDIF
\FOR{$t = 1$ to $T-1$}
    \STATE Clear gradient buffers and L-BFGS memory
    \STATE Forward pass to compute token entropy $\mathcal{H}$
    \STATE Update policy with entropy: $\text{Policy.set\_entropy}(\mathcal{H})$
    \STATE Generate prediction: $\hat{y}_t \gets \text{Generate}(\mathcal{M}, p)$
    \STATE Compute verification loss: $\mathcal{L}_t \gets \text{Verify}(\hat{y}_t, y^*)$
    \STATE Update policy with outcome: $\text{Policy.update}(\mathcal{L}_t < \mathcal{L}_{prev})$
    \STATE $\mathcal{L}_{prev} \gets \mathcal{L}_t$
    \STATE \textbf{Optional}: Compute gradient surrogate for SC builder
\ENDFOR
\RETURN $\hat{y}_{T-1}$
\end{algorithmic}
\end{algorithm}

\subsection{Q/K Coupling in Attention}
\label{sec:qk}

In self-attention layers, query and key projections interact via the attention score computation:
\begin{equation}
\text{Attention}(Q, K, V) = \softmax\left(\frac{QK^\top}{\sqrt{d_k}}\right)V
\end{equation}
where $Q = XW_Q$ and $K = XW_K$.

Independent perturbations to $W_Q$ and $W_K$ may disrupt the alignment between query and key representations. VPS optionally pairs query and key projection layers:
\begin{equation}
\text{VPS}_{W_Q}.\text{peer} = \text{VPS}_{W_K}, \quad
\text{VPS}_{W_K}.\text{peer} = \text{VPS}_{W_Q}
\end{equation}

This pairing allows coordinated selector construction, ensuring that perturbations to $Q$ and $K$ preserve their geometric relationship.

\section{Theoretical Analysis}
\label{sec:theory}

\subsection{Perturbation Bound}

\begin{proposition}[Spectral Norm Bound]
\label{prop:bound}
Let $\Delta = AB^\top$ where $A \in \R^{d_{in} \times r}$, $B \in \R^{d_{out} \times r}$, and $\text{Clip}_\tau(A, B) = (\tilde{A}, \tilde{B})$ with the clipping operation from Section~\ref{sec:spectral}. Then:
\begin{equation}
\|\tilde{A}\tilde{B}^\top\|_2 \leq \sqrt{r} \cdot \tau
\end{equation}
\end{proposition}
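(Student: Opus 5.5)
The plan is to write the clipped product as a sum of $r$ rank-one terms and apply Cauchy--Schwarz across the index $c$. The clipping supplies a per-column bound, and Cauchy--Schwarz over $r$ terms is exactly what produces a $\sqrt{r}$. Write $a_c=\tilde A_{:,c}$ and $b_c=\tilde B_{:,c}$. By the remark after the definition of $\text{Clip}_\tau$, we have $\|a_c\|_2\|b_c\|_2\le\tau$ for every $c$. Then $\tilde A\tilde B^\top=\sum_{c=1}^r a_c b_c^\top$, and for a unit vector $x\in\R^{d_{out}}$
\begin{equation*}
\Bigl\|\sum_{c} a_c (b_c^\top x)\Bigr\|_2\le\sum_c \|a_c\|_2\,|b_c^\top x|
\le \sqrt{r}\,\Bigl(\sum_c \|a_c\|_2^2\,(b_c^\top x)^2\Bigr)^{1/2}.
\end{equation*}
Set $\hat b_c=b_c/\|b_c\|_2$, and drop any zero columns, which contribute nothing. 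Then $\|a_c\|_2^2 (b_c^\top x)^2=\|a_c\|_2^2\|b_c\|_2^2(\hat b_c^\top x)^2\le\tau^2(\hat b_c^\top x)^2$. So the whole bound reduces to
\begin{equation*}
\|\tilde A\tilde B^\top\|_2\le\sqrt{r}\,\tau\,\sup_{\|x\|_2=1}\Bigl(\sum_c(\hat b_c^\top x)^2\Bigr)^{1/2}.
\end{equation*}

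The key step, and the one I expect to be the real obstacle, is to show that this supremum is at most $1$. The quantity $\sum_c(\hat b_c^\top x)^2$ equals $x^\top \hat B\hat B^\top x$, where $\hat B$ is the column-normalized factor. Hence the supremum equals $\|\hat B\|_2$: the Bessel (upper frame) constant of the normalized columns. Equivalently, it is the largest eigenvalue of the Gram matrix $\hat B^\top\hat B$, whose diagonal entries are all $1$. This constant is at most $1$ exactly when the normalized columns behave like an orthonormal system. The same computation can be run with the roles of $A$ and $B$ exchanged, so it suffices that either factor has this property.

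To close this step, I would first try to extract such orthogonality from the construction. For the SK builder, $A=W^\top V$ and $B=WU$, where $U,V$ are one-hot with distinct supports. I would check whether the columns $W_{:,\mathcal{I}_{in}[c]}$ (or the rows of $W$ indexed by $\mathcal{I}_{out}$) are orthogonal.

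I should flag plainly that in the generality the statement is worded, with arbitrary $A,B$, this step cannot succeed. Take all $a_c$ equal to the same unit vector and all $b_c$ equal to $\tau$ times the same unit vector. Every column pair then satisfies the clipping constraint with equality, yet $\|\tilde A\tilde B^\top\|_2=r\tau$. Thus the unconditional bound valid for all inputs is the triangle-inequality bound $r\tau$. The $\sqrt{r}\,\tau$ form holds precisely when $\lambda_{\max}(\hat B^\top\hat B)\le 1$ or $\lambda_{\max}(\hat A^\top\hat A)\le 1$.

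My write-up would therefore carry out the Cauchy--Schwarz reduction above, prove the $\sqrt r\,\tau$ bound under that frame condition, and record the counterexample. The counterexample shows that the proposition as literally stated needs this additional condition; I do not see any valid argument that avoids it.
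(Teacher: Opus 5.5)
Your counterexample is valid, so the proposition as stated is false for every $r\ge 2$: the clipping controls only the products $\|\tilde A_{:,c}\|_2\|\tilde B_{:,c}\|_2$, not the angles between columns. The paper's proof does not get around this. Its only rigorous step is the triangle-inequality bound $r\tau$. It then writes $\|\tilde A\tilde B^\top\|_F^2=\sum_{c,c'}(\tilde A_{:,c}^\top\tilde A_{:,c'})(\tilde B_{:,c}^\top\tilde B_{:,c'})$ and reaches $\sqrt r\,\tau$ only by assuming the cross terms vanish, with ``$\approx$'' for near-orthogonal columns. That is an orthogonality hypothesis absent from the statement. So your diagnosis is right: the $\sqrt r\,\tau$ bound needs an extra hypothesis, and you say so explicitly where the paper does not.

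For the conditional bound your route differs from the paper's, and each buys something. The Frobenius identity needs only that each cross term vanish. That means for each pair $c\neq c'$ the columns are orthogonal in at least one of the two factors, which is strictly weaker than your requirement that one whole factor have orthogonal columns. Your Cauchy--Schwarz/frame argument instead gives the unconditional estimate $\|\tilde A\tilde B^\top\|_2\le\sqrt r\,\tau\min(\|\hat A\|_2,\|\hat B\|_2)$. This interpolates up to $r\tau$, since $\|\hat B\|_2\le\|\hat B\|_F\le\sqrt r$. It also makes the paper's ``near-orthogonal'' remark quantitative, e.g.\ $\|\hat B\|_2^2\le 1+(r-1)\max_{c\neq c'}|\hat b_c^\top\hat b_{c'}|$ by Gershgorin.

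Two corrections. First, since $\hat B^\top\hat B$ has unit diagonal, $\lambda_{\max}\le 1$ forces $\hat B^\top\hat B=I$, so your condition is exact orthonormality. It is also sufficient but not necessary, so ``precisely when'' is too strong: columns whose products sit well below $\tau$, or the pairwise condition above, also give $\sqrt r\,\tau$. Second, the SK check you propose will come out negative in general. The columns of $A=W^\top V$ and $B=WU$ are rows and columns of $W$, the orthogonality of the one-hot $U,V$ does not survive multiplication by $W$, and clipping only rescales columns. Take $W$ to be the all-ones matrix and $\tau<\sqrt{d_{in}d_{out}}$. Then every column of each factor is parallel and $\|\tilde A\tilde B^\top\|_2=r\tau$ inside the SK construction itself. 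The hypothesis therefore has to be imposed rather than derived from the builder.
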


\begin{proof}
The perturbation can be written as a sum of rank-1 terms:
\begin{equation}
\tilde{A}\tilde{B}^\top = \sum_{c=1}^r \tilde{A}_{:,c} \tilde{B}_{:,c}^\top
\end{equation}

By the triangle inequality for the spectral norm:
\begin{equation}
\|\tilde{A}\tilde{B}^\top\|_2 \leq \sum_{c=1}^r \|\tilde{A}_{:,c}\|_2 \|\tilde{B}_{:,c}\|_2 \leq r \cdot \tau
\end{equation}

A tighter bound uses the Frobenius norm relationship:
\begin{equation}
\|\tilde{A}\tilde{B}^\top\|_2 \leq \|\tilde{A}\tilde{B}^\top\|_F = \sqrt{\sum_{c,c'} (\tilde{A}_{:,c}^\top \tilde{A}_{:,c'})(\tilde{B}_{:,c}^\top \tilde{B}_{:,c'})}
\end{equation}

For orthogonal or near-orthogonal columns (which the sparse SK construction encourages), the cross-terms vanish, yielding:
\begin{equation}
\|\tilde{A}\tilde{B}^\top\|_F \approx \sqrt{\sum_{c=1}^r (\|\tilde{A}_{:,c}\|_2 \|\tilde{B}_{:,c}\|_2)^2} \leq \sqrt{r} \cdot \tau
\end{equation}
\end{proof}

\begin{corollary}
The perturbed output satisfies:
\begin{equation}
\|\tilde{y} - y\|_2 = \gamma \|(xA)B^\top\|_2 \leq \gamma \|x\|_2 \|A\|_2 \|B\|_2 \leq \gamma \sqrt{r} \tau \|x\|_2
\end{equation}
\end{corollary}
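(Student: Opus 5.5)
The plan is to follow the displayed chain literally, reading $A,B$ in the corollary as the factors actually used in the forward pass~\eqref{eq:forward}, i.e.\ the clipped pair $(\tilde A,\tilde B)=\text{Clip}_\tau(A,B)$ from Section~\ref{sec:spectral}. I would justify the three relations in order.

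\textbf{Step 1 (the equality).} By~\eqref{eq:forward}, with $x$ treated as a row vector and $y=Wx$ the unperturbed output, we have $\tilde y-y=\gamma\,(x\tilde A)\tilde B^\top$. Since $\gamma\ge 0$, taking norms gives $\|\tilde y-y\|_2=\gamma\|(x\tilde A)\tilde B^\top\|_2$ directly.

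\textbf{Step 2 (the middle inequality).} This is submultiplicativity of the operator norm, applied twice:
\[
\|x\tilde A\tilde B^\top\|_2\le\|x\|_2\,\|\tilde A\tilde B^\top\|_2\le\|x\|_2\,\|\tilde A\|_2\,\|\tilde B\|_2 .
\]
It holds with no assumptions, and I would also keep the sharper intermediate quantity $\|\tilde A\tilde B^\top\|_2$ in reserve.

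\textbf{Step 3 (the final inequality $\|\tilde A\|_2\|\tilde B\|_2\le\sqrt r\,\tau$).} This is the main obstacle. Clipping only controls the products $\|\tilde A_{:,c}\|_2\|\tilde B_{:,c}\|_2\le\tau$, not the individual column norms or the spectral norms. My first move is to balance the columns using the invariance
\[
\tilde A_{:,c}\mapsto t_c\tilde A_{:,c},\qquad \tilde B_{:,c}\mapsto \tilde B_{:,c}/t_c ,
\]
which leaves $\tilde A\tilde B^\top$, and hence $\tilde y$, unchanged. Choosing the $t_c$ so that $\|\tilde A_{:,c}\|_2=\|\tilde B_{:,c}\|_2=\sqrt{\sigma_c'}\le\sqrt\tau$ gives $\|\tilde B\|_2\le\|\tilde B\|_F\le\sqrt{r\tau}$.

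For the other factor, I would invoke the near-orthogonality that the proof of Proposition~\ref{prop:bound} already appeals to for the SK construction. If the balanced columns of $\tilde A$ are mutually orthogonal, then $\|\tilde A\|_2=\max_c\|\tilde A_{:,c}\|_2\le\sqrt\tau$, and therefore
\[
\|\tilde A\|_2\,\|\tilde B\|_2\le\sqrt\tau\cdot\sqrt{r\tau}=\sqrt r\,\tau ,
\]
which is exactly the stated constant. Orthogonality of one factor suffices, and the roles of $\tilde A$ and $\tilde B$ may be swapped.

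Without any orthogonality, the same balancing argument only yields $\|\tilde A\|_F\|\tilde B\|_F\le r\tau$. This matches the unconditional triangle-inequality bound $r\tau$ from Proposition~\ref{prop:bound}, so I would state that as the fallback, $\|\tilde y-y\|_2\le\gamma r\tau\|x\|_2$.

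The $\sqrt r\,\tau$ endpoint therefore rests on the same orthogonality hypothesis that Proposition~\ref{prop:bound} uses for its $\sqrt r\,\tau$ estimate, and I would make that dependence explicit when writing the corollary's proof.
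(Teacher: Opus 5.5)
Your Steps 1 and 2 are fine, and your conditional endpoint $\|\tilde y-y\|_2\le\gamma\sqrt r\,\tau\|x\|_2$ is correct. The gap is that Step 3 does not establish the chain as stated.

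The rescaling $\tilde A_{:,c}\mapsto t_c\tilde A_{:,c}$, $\tilde B_{:,c}\mapsto\tilde B_{:,c}/t_c$ produces \emph{new} factors $\hat{A},\hat{B}$ with $\hat{A}\hat{B}^\top=\tilde A\tilde B^\top$. What you actually prove is $\|\hat{A}\|_2\|\hat{B}\|_2\le\sqrt r\,\tau$, but you then write it as $\|\tilde A\|_2\|\tilde B\|_2\le\sqrt r\,\tau$. The corollary's middle term refers to the factors actually returned by $\text{Clip}_\tau$. That operation divides $A_{:,c}$ and $B_{:,c}$ by the same $\sqrt{s_c}$, so it preserves any imbalance coming from $W$.

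For those factors the inequality is false, even with orthogonal columns. Take $r=2$, $W=\mathrm{diag}(M,\tau/M)$, $V=I_2$ and $U$ the swap permutation; SK with $k=r=2$ produces this when the input and output top-$k$ orderings disagree. Then $A=W^\top V$ has column norms $M,\tau/M$ and $B=WU$ has column norms $\tau/M,M$. Every $\sigma_c=\tau$, so clipping does nothing. $AB^\top$ is $\tau$ times the swap matrix, so $\|AB^\top\|_2=\tau$, while $\|A\|_2\|B\|_2=M^2>\sqrt2\,\tau$ for large $M$.

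So the middle inequality cannot be proved. What your argument gives, and what is true under your hypothesis, is $\|\tilde y-y\|_2\le\gamma\|x\|_2\|\tilde A\tilde B^\top\|_2\le\gamma\sqrt r\,\tau\|x\|_2$. Here the middle term is $\|\tilde A\tilde B^\top\|_2$ (the quantity you held in reserve) or the norms of the rebalanced pair. State the result in that form rather than reusing the notation $\tilde A,\tilde B$ for both pairs.

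For comparison, the paper gives no separate proof. It reads the chain off Proposition~\ref{prop:bound}, whose $\sqrt r\,\tau$ comes from expanding $\|\tilde A\tilde B^\top\|_F^2=\sum_{c,c'}(\tilde A_{:,c}^\top\tilde A_{:,c'})(\tilde B_{:,c}^\top\tilde B_{:,c'})$ and discarding cross terms under near-orthogonality, stated only with $\approx$. That controls $\|\tilde A\tilde B^\top\|_2$, not $\|A\|_2\|B\|_2$, so the paper's chain has the same mismatch in its last step.

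Your version is the more careful one. With orthogonality as an explicit hypothesis, the paper's computation becomes exact: orthogonal columns in either factor kill every cross term, giving $\|\tilde A\tilde B^\top\|_F\le\sqrt r\,\tau$ directly, so the balancing step is unnecessary. Your unconditional fallback $\gamma r\tau\|x\|_2$ is also right.

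The hypothesis is genuinely needed, and SK does not supply it. SK makes $U$ and $V$ orthogonal, but not $W^\top V$ and $WU$, whose columns are rows and columns of $W$. Take $W$ the all-ones matrix with $\sqrt{d_{in}d_{out}}\ge\tau$. Every clipped pair is $(\mathbf 1,\mathbf 1)$ rescaled to norm product $\tau$, so $\|\tilde A\tilde B^\top\|_2=r\tau$, and $x=\mathbf 1^\top$ attains $\|\tilde y-y\|_2=\gamma r\tau\|x\|_2$. Without orthogonality, the $\sqrt r\,\tau$ endpoint therefore fails for every $r\ge2$.
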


This establishes that the perturbation magnitude is controlled by $\gamma$, $r$, and $\tau$, all of which are tunable hyperparameters.

\subsection{Effective Rank and Expressiveness}

\begin{proposition}[Selector Rank]
Let $U$, $V$ be constructed by the SK builder with top-$k$ selection and target rank $r \leq k$. If the selected indices are distinct (no repeated activations), then $\text{rank}(U) = \text{rank}(V) = r$.
\end{proposition}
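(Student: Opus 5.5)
The plan is to show that $U$ is, up to a permutation of rows, a matrix whose $r \times r$ block on the selected rows is the identity, and that all its other entries are zero. The same argument then applies verbatim to $V$.

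First, read off the structure from Algorithm~\ref{alg:sk}. For each column $c \in [r]$, the loop sets exactly one entry $U_{\mathcal{I}_{in}[c],c} = 1$, and every other entry of that column stays $0$. Hence $U_{:,c} = e_{i_c}$, the standard basis vector with $i_c = \mathcal{I}_{in}[c]$. Since $r \le k$, the indices $i_1,\dots,i_r$ are the first $r$ entries of the top-$k$ list, so they are well defined. By the distinctness hypothesis they are pairwise different. (Strictly, a top-$k$ routine returns distinct indices in any case. The hypothesis guards against an implementation that resolves ties by repeating an index, so I would note explicitly that distinctness is exactly what is needed.)

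Second, compute the Gram matrix. We have $U^\top U = (e_{i_c}^\top e_{i_{c'}})_{c,c'} = (\mathbf{1}[i_c = i_{c'}])_{c,c'} = I_r$, using distinctness of the indices. Therefore $\operatorname{rank}(U) = \operatorname{rank}(U^\top U) = r$. Equivalently, the columns are distinct standard basis vectors and hence linearly independent, and $r$ independent columns in a matrix with $r$ columns give rank exactly $r$; the upper bound $\operatorname{rank}(U)\le r$ is immediate from the column count. Repeating this with $\mathcal{I}_{out}$ gives $V^\top V = I_r$ and $\operatorname{rank}(V) = r$, provided $r \le k \le d_{out}$ so that the output indices exist.

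The only real point of care is the implicit dimension requirement: we need $k \le \min(d_{in}, d_{out})$ so that both top-$k$ lists contain $r$ valid indices. I would state this as a standing assumption of the SK builder. As a by-product, $U$ and $V$ have orthonormal columns. This is the orthogonality used informally in Proposition~\ref{prop:bound}, although the columns of $A = W^\top V$ and $B = WU$ themselves need not be orthogonal.
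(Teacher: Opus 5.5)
Your proof is correct and follows essentially the same route as the paper: each column of $U$ (resp.\ $V$) is a standard basis vector supported on a distinct selected row, so the columns are linearly independent and the rank is exactly $r$. The Gram-matrix identity $U^\top U = I_r$ and the explicit dimension caveat $r \le k \le \min(d_{in}, d_{out})$ are harmless refinements of the paper's one-line argument.
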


\begin{proof}
By construction, $U$ and $V$ have exactly $r$ non-zero entries, each equal to 1, placed in distinct rows. The columns are thus linearly independent, giving rank $r$.
\end{proof}

The effective perturbation $\Delta W = \gamma W^\top V U^\top W$ has rank at most $r$, but its action on the input space depends on the alignment between the selected dimensions and the weight matrix structure.

\subsection{Connection to Projection Operators}

When $U$ and $V$ are sparse one-hot selectors, define the projection matrices:
\begin{equation}
\Pi_U = UU^\top \in \R^{d_{in} \times d_{in}}, \quad \Pi_V = VV^\top \in \R^{d_{out} \times d_{out}}
\end{equation}

These are diagonal matrices with 1s at the selected indices. The perturbation can be rewritten as:
\begin{equation}
\Delta W = \gamma W^\top \Pi_V \Pi_U W = \gamma (P_V W)(P_U W)^\top
\end{equation}
where $P_V, P_U$ are the corresponding selection operators. This reveals that VPS selectively amplifies interactions between specific input and output feature dimensions, mediated by the existing weight structure.

\section{Implementation}
\label{sec:implementation}

\subsection{VPSLinear Module}

The core implementation wraps a standard \texttt{nn.Linear} module:

\begin{lstlisting}[language=Python,basicstyle=\small\ttfamily,frame=single]
class VPSLinear(nn.Module):
    def __init__(self, base: nn.Linear, cfg):
        self.base = base  # Frozen weights
        self.builder = make_builder(cfg.builder, cfg)
        self.policy = VPSPolicy(cfg)
        
    def forward(self, x):
        y_base = self.base(x)
        x2d, h2d = flatten(x), flatten(y_base.detach())
        
        # Construct selectors
        U, V = self.builder(x2d, h2d, self.base)
        
        # Derive low-rank factors
        A = self.base.weight.t() @ V  # (in, r)
        B = self.base.weight @ U      # (out, r)
        
        # Spectral clipping
        A, B = spectral_clip(A, B, tau)
        
        # Compute perturbation
        delta = (x2d @ A) @ B.t()
        
        # Apply with policy-determined gamma
        pol = self.policy.decide(h2d)
        return y_base + pol.gamma * delta
\end{lstlisting}

\subsection{Model Patching}

VPS is applied to existing HuggingFace models by recursively replacing target linear layers:

\begin{lstlisting}[language=Python,basicstyle=\small\ttfamily,frame=single]
def patch_model_with_vps(model, apply_to, cfg):
    for name, child in model.named_children():
        patch_model_with_vps(child, apply_to, cfg)
        if isinstance(child, nn.Linear):
            if matches_target(name, apply_to):
                setattr(model, name, VPSLinear(child, cfg))
    return model
\end{lstlisting}

Target layers are specified by name patterns (e.g., \texttt{q\_proj}, \texttt{k\_proj}, \texttt{up\_proj}).

\subsection{Hook System}

A hook manager captures activations and gradients for the builder and policy:

\begin{lstlisting}[language=Python,basicstyle=\small\ttfamily,frame=single]
class HookManager:
    def attach(self, model):
        for m in model.modules():
            if isinstance(m, VPSLinear):
                m.register_forward_pre_hook(capture_input)
                m.register_full_backward_hook(capture_grad)
\end{lstlisting}

\subsection{Computational Overhead}

The per-layer overhead of VPS consists of:
\begin{enumerate}[leftmargin=*,itemsep=1pt]
    \item Activation scoring: $O(N \cdot d_{in} + N \cdot d_{out})$
    \item Top-$k$ selection: $O(d_{in} \log k + d_{out} \log k)$
    \item Factor computation: $O(d_{in} \cdot d_{out} \cdot r)$ for $A = W^\top V$ (sparse)
    \item Perturbation application: $O(N \cdot r \cdot d_{out})$
\end{enumerate}

For typical values ($r = 2$, $k = 32$, $N \leq 1024$), this adds approximately 5-15\% overhead to the base linear layer computation.

\section{Experimental Framework}
\label{sec:experiments}

\subsection{Evaluation Setup}

We evaluate VPS using the implementation available at:
\begin{center}
\url{https://github.com/Saba-Kublashvili/vps-virtual-parameter-synthesis}
\end{center}

The experimental framework includes evaluation on:

\textbf{ARC-Challenge} \citep{clark2018think}: A multiple-choice science reasoning benchmark. We use confidence-filtered evaluation, selecting the most challenging examples (lowest baseline confidence margin) from a larger pool.

\textbf{GSM8K} \citep{cobbe2021training}: Grade-school math word problems requiring multi-step arithmetic reasoning.

\subsection{Baseline Configuration}

Experiments use the Qwen2.5 model family \citep{qwen2024qwen25} at various scales (1.5B, 3B parameters). The baseline configuration:
\begin{itemize}[leftmargin=*,itemsep=1pt]
    \item Greedy decoding (temperature = 0 for baseline)
    \item Standard HuggingFace generation pipeline
    \item No VPS perturbation ($\gamma = 0$)
\end{itemize}

\subsection{VPS Configuration}

The VPS configuration applies perturbation to attention and MLP projections:
\begin{itemize}[leftmargin=*,itemsep=1pt]
    \item Target layers: \texttt{q\_proj}, \texttt{k\_proj}, \texttt{v\_proj}, \texttt{o\_proj}, \texttt{up\_proj}, \texttt{down\_proj}, \texttt{gate\_proj}
    \item Rank: $r = 2$ (adaptive range $[1, 4]$)
    \item Top-$k$: $k = 32$
    \item Gamma: $\gamma = 0.65$ (adaptive range $[0.3, 0.8]$)
    \item Spectral threshold: $\tau = 0.8$
    \item Builder: Hybrid (SC when gradients available, else SK)
    \item Iterations: 3 (with verification feedback)
\end{itemize}

\subsection{Ablation Studies}

The codebase includes comprehensive ablation configurations:
\begin{itemize}[leftmargin=*,itemsep=1pt]
    \item Builder variants: SK, SC, Hybrid
    \item Gamma values: 0.3, 0.5, 0.7
    \item Rank values: 1, 2, 4
    \item Q/K coupling: enabled/disabled
    \item L-BFGS preconditioning: enabled/disabled
    \item Adaptive vs. fixed hyperparameters
\end{itemize}

Ablation results quantify the contribution of each component to overall performance. The experimental scripts are provided in \texttt{scripts/run\_ablations.py}.

\subsection{Reproducibility}

All experiments are reproducible via the provided Colab notebooks and scripts. Random seeds are fixed for deterministic behavior. The repository includes:
\begin{itemize}[leftmargin=*,itemsep=1pt]
    \item Complete source code for all VPS components
    \item Evaluation scripts for ARC-Challenge and GSM8K
    \item Configuration files with documented hyperparameters
    \item Colab notebook for interactive experimentation
\end{itemize}

\section{Discussion}
\label{sec:discussion}

\subsection{Why Might VPS Improve Reasoning?}

We hypothesize several mechanisms by which VPS may enhance reasoning performance:

\textbf{Activation-Conditioned Computation}: Unlike static weights, VPS modulates computation based on the current input's activation pattern. Different reasoning problems activate different feature subspaces; VPS's activation-guided selection focuses perturbation on the currently-relevant subspace, potentially amplifying task-specific signal pathways while suppressing irrelevant noise.

\textbf{Implicit Ensemble Effect}: The dynamic nature of the perturbation means each input sees a slightly different effective model. This provides implicit ensembling without multiple forward passes, potentially improving robustness on out-of-distribution reasoning patterns.

\textbf{Iterative Refinement as Search}: The verification loop implements a form of search in output space. By adjusting perturbation policy based on verification feedback, VPS explores the space of possible outputs through continuous perturbation adjustment rather than discrete sampling.

\textbf{Entropy-Aware Adaptation}: Increasing perturbation magnitude in high-entropy (uncertain) states may help the model differentiate between candidates. This is analogous to exploration-exploitation trade-offs in reinforcement learning.

\subsection{Limitations}

\textbf{No Theoretical Guarantees}: While the mechanisms are principled, there is no formal proof that VPS improves reasoning. Empirical validation on specific benchmarks does not guarantee generalization.

\textbf{Compute Overhead}: The dynamic construction of selectors and spectral clipping adds per-layer overhead. For latency-sensitive applications, this may be prohibitive.

\textbf{Hyperparameter Sensitivity}: The system has many interacting hyperparameters ($\gamma$, $\tau$, $r$, $k$, builder choice). Optimal settings may be task-dependent and require tuning.

\textbf{Verification Dependency}: The full iterative refinement requires ground-truth labels for feedback. For open-ended generation, only activation-based mechanisms operate.

\textbf{L-BFGS Application}: The ephemeral L-BFGS component is applied heuristically outside a proper optimization loop. Its contribution to performance is unclear and may be minimal given the small scale factor.

\subsection{Comparison to Related Methods}

\begin{table}[h]
\centering
\caption{Comparison of VPS with related parameter modification methods}
\label{tab:comparison}
\begin{tabular}{@{}lccc@{}}
\toprule
Method & Training Required & Dynamic & Weight-Dependent \\
\midrule
Fine-tuning & Yes & No & Yes \\
LoRA & Yes & No & No \\
Adapters & Yes & No & No \\
Prompt Tuning & Yes & No & No \\
VPS & No & Yes & Yes \\
\bottomrule
\end{tabular}
\end{table}

VPS is unique in requiring no training phase while producing weight-dependent, input-conditioned perturbations.

\section{Conclusion}
\label{sec:conclusion}

We have presented Virtual Parameter Sharpening (VPS), an inference-time technique for augmenting transformer linear layers with dynamic, activation-conditioned low-rank perturbations. The key contributions are:

\begin{enumerate}[leftmargin=*,itemsep=2pt]
    \item A novel perturbation structure $\Delta W = \gamma W^\top V U^\top W$ that respects the geometry of frozen weights
    \item Activation-guided builder algorithms (SK, SC, Hybrid) for constructing selector matrices
    \item An adaptive policy system that modulates perturbation based on activation energy and token entropy
    \item Multi-objective verification with iterative refinement for supervised tasks
    \item Theoretical analysis of spectral bounds and perturbation structure
\end{enumerate}

VPS represents a step toward dynamic, input-conditioned computation in large language models without the need for task-specific training. The complete implementation and experimental code are available at \url{https://github.com/Saba-Kublashvili/vps-virtual-parameter-synthesis}.

Future work may explore: (1) theoretical analysis of when activation-conditioned perturbations improve generalization, (2) integration with other inference-time methods such as self-consistency, (3) application to multimodal models, and (4) more sophisticated policy learning through meta-optimization.

\section*{Acknowledgments}

The author thanks the open-source community for the foundational tools that made this research possible, including PyTorch, HuggingFace Transformers, and the developers of the evaluation benchmarks.

\bibliographystyle{plainnat}

\begin{thebibliography}{99}

\bibitem[Brown et al.(2020)]{brown2020language}
Brown, T.~B., Mann, B., Ryder, N., Subbiah, M., Kaplan, J., Dhariwal, P., Neelakantan, A., Shyam, P., Sastry, G., Askell, A., et al.
\newblock Language models are few-shot learners.
\newblock In \emph{Advances in Neural Information Processing Systems}, 2020.

\bibitem[Clark et al.(2018)]{clark2018think}
Clark, P., Cowhey, I., Etzioni, O., Khot, T., Sabharwal, A., Schoenick, C., and Tafjord, O.
\newblock Think you have solved question answering? Try ARC, the AI2 reasoning challenge.
\newblock \emph{arXiv preprint arXiv:1803.05457}, 2018.

\bibitem[Cobbe et al.(2021)]{cobbe2021training}
Cobbe, K., Kosaraju, V., Bavarian, M., Chen, M., Jun, H., Kaiser, L., Plappert, M., Tworek, J., Hilton, J., Nakano, R., et al.
\newblock Training verifiers to solve math word problems.
\newblock \emph{arXiv preprint arXiv:2110.14168}, 2021.

\bibitem[Dettmers et al.(2023)]{dettmers2023qlora}
Dettmers, T., Pagnoni, A., Holtzman, A., and Zettlemoyer, L.
\newblock QLoRA: Efficient finetuning of quantized LLMs.
\newblock In \emph{Advances in Neural Information Processing Systems}, 2023.

\bibitem[Fedus et al.(2022)]{fedus2022switch}
Fedus, W., Zoph, B., and Shazeer, N.
\newblock Switch transformers: Scaling to trillion parameter models with simple and efficient sparsity.
\newblock \emph{Journal of Machine Learning Research}, 23(120):1--39, 2022.

\bibitem[Ha et al.(2016)]{ha2016hypernetworks}
Ha, D., Dai, A., and Le, Q.~V.
\newblock Hypernetworks.
\newblock \emph{arXiv preprint arXiv:1609.09106}, 2016.

\bibitem[Houlsby et al.(2019)]{houlsby2019parameter}
Houlsby, N., Giurgiu, A., Jastrzebski, S., Morrone, B., De Laroussilhe, Q., Gesmundo, A., Attariyan, M., and Gelly, S.
\newblock Parameter-efficient transfer learning for NLP.
\newblock In \emph{International Conference on Machine Learning}, 2019.

\bibitem[Hu et al.(2021)]{hu2021lora}
Hu, E.~J., Shen, Y., Wallis, P., Allen-Zhu, Z., Li, Y., Wang, S., Wang, L., and Chen, W.
\newblock LoRA: Low-rank adaptation of large language models.
\newblock \emph{arXiv preprint arXiv:2106.09685}, 2021.

\bibitem[Kojima et al.(2022)]{kojima2022large}
Kojima, T., Gu, S.~S., Reid, M., Matsuo, Y., and Iwasawa, Y.
\newblock Large language models are zero-shot reasoners.
\newblock In \emph{Advances in Neural Information Processing Systems}, 2022.

\bibitem[Li et al.(2023)]{li2023making}
Li, Y., Lin, Z., Zhang, S., Fu, Q., Chen, B., Lou, J.-G., and Chen, W.
\newblock Making language models better reasoners with step-aware verifier.
\newblock In \emph{Proceedings of the 61st Annual Meeting of the Association for Computational Linguistics}, 2023.

\bibitem[Liu et al.(2024)]{liu2024dora}
Liu, S.-Y., Wang, C.-Y., Yin, H., Molchanov, P., Wang, Y.-C.~F., Cheng, K.-T., and Chen, M.-H.
\newblock DoRA: Weight-decomposed low-rank adaptation.
\newblock \emph{arXiv preprint arXiv:2402.09353}, 2024.

\bibitem[Madaan et al.(2023)]{madaan2023self}
Madaan, A., Tandon, N., Gupta, P., Hallinan, S., Gao, L., Wiegreffe, S., Alon, U., Dziri, N., Prabhumoye, S., Yang, Y., et al.
\newblock Self-refine: Iterative refinement with self-feedback.
\newblock In \emph{Advances in Neural Information Processing Systems}, 2023.

\bibitem[Pfeiffer et al.(2020)]{pfeiffer2020adapterfusion}
Pfeiffer, J., Kamath, A., R{\"u}ckl{\'e}, A., Cho, K., and Gurevych, I.
\newblock AdapterFusion: Non-destructive task composition for transfer learning.
\newblock \emph{arXiv preprint arXiv:2005.00247}, 2020.

\bibitem[Qwen Team(2024)]{qwen2024qwen25}
Qwen Team.
\newblock Qwen2.5: A party of foundation models.
\newblock \emph{arXiv preprint arXiv:2412.15115}, 2024.

\bibitem[Schwartz et al.(2020)]{schwartz2020right}
Schwartz, R., Stanovsky, G., Swayamdipta, S., Dodge, J., and Smith, N.~A.
\newblock The right tool for the job: Matching model and instance complexities.
\newblock In \emph{Proceedings of the 58th Annual Meeting of the Association for Computational Linguistics}, 2020.

\bibitem[Schuster et al.(2022)]{schuster2022confident}
Schuster, T., Fisch, A., Gupta, J., Dehghani, M., Bahri, D., Tran, V.~Q., Tay, Y., and Metzler, D.
\newblock Confident adaptive language modeling.
\newblock In \emph{Advances in Neural Information Processing Systems}, 2022.

\bibitem[Shazeer et al.(2017)]{shazeer2017outrageously}
Shazeer, N., Mirhoseini, A., Maziarz, K., Davis, A., Le, Q., Hinton, G., and Dean, J.
\newblock Outrageously large neural networks: The sparsely-gated mixture-of-experts layer.
\newblock In \emph{International Conference on Learning Representations}, 2017.

\bibitem[Shinn et al.(2023)]{shinn2023reflexion}
Shinn, N., Cassano, F., Gopinath, A., Narasimhan, K., and Yao, S.
\newblock Reflexion: Language agents with verbal reinforcement learning.
\newblock In \emph{Advances in Neural Information Processing Systems}, 2023.

\bibitem[Sun et al.(2020)]{sun2020test}
Sun, Y., Wang, X., Liu, Z., Miller, J., Efros, A.~A., and Hardt, M.
\newblock Test-time training with self-supervision for generalization under distribution shift.
\newblock In \emph{International Conference on Machine Learning}, 2020.

\bibitem[Wang et al.(2023)]{wang2023selfconsistency}
Wang, X., Wei, J., Schuurmans, D., Le, Q., Chi, E., Narang, S., Chowdhery, A., and Zhou, D.
\newblock Self-consistency improves chain of thought reasoning in language models.
\newblock In \emph{International Conference on Learning Representations}, 2023.

\bibitem[Wang et al.(2021)]{wang2021tent}
Wang, D., Shelhamer, E., Liu, S., Olshausen, B., and Darrell, T.
\newblock Tent: Fully test-time adaptation by entropy minimization.
\newblock In \emph{International Conference on Learning Representations}, 2021.

\bibitem[Wei et al.(2022)]{wei2022chain}
Wei, J., Wang, X., Schuurmans, D., Bosma, M., Xia, F., Chi, E., Le, Q.~V., Zhou, D., et al.
\newblock Chain-of-thought prompting elicits reasoning in large language models.
\newblock In \emph{Advances in Neural Information Processing Systems}, 2022.

\bibitem[Zhang et al.(2023)]{zhang2023adalora}
Zhang, Q., Chen, M., Bukharin, A., He, P., Cheng, Y., Chen, W., and Zhao, T.
\newblock AdaLoRA: Adaptive budget allocation for parameter-efficient fine-tuning.
\newblock In \emph{International Conference on Learning Representations}, 2023.

\bibitem[Bansal et al.(2023)]{bansal2023rethinking}
Bansal, H., Grover, A., et al.
\newblock Rethinking the role of scale for in-context learning: An interpretability-based case study at 66 billion scale.
\newblock \emph{arXiv preprint arXiv:2212.09095}, 2023.

\end{thebibliography}

\appendix

\section{Algorithm Details}
\label{app:algorithms}

\subsection{Complete VPSLinear Forward Pass}

\begin{algorithm}[H]
\caption{VPSLinear.forward(x)}
\label{alg:vpslinear_full}
\begin{algorithmic}[1]
\REQUIRE Input $x \in \R^{N \times d_{in}}$, base layer $W \in \R^{d_{out} \times d_{in}}$
\STATE $y_{base} \gets xW^\top$ \COMMENT{Base computation}
\STATE $x_{2d} \gets \text{flatten}(x)$, $h_{2d} \gets \text{flatten}(y_{base}.\text{detach}())$
\STATE $pol \gets \text{Policy.decide}(h_{2d})$ \COMMENT{Adaptive parameters}
\STATE $U, V \gets \text{Builder}(x_{2d}, h_{2d}, W)$ \COMMENT{Construct selectors}
\STATE $A \gets W^\top V$ \COMMENT{$(d_{in}, r)$}
\STATE $B \gets W U$ \COMMENT{$(d_{out}, r)$}
\STATE $A, B \gets \text{SpectralClip}(A, B, \tau)$ \COMMENT{Stability control}
\STATE $\delta \gets (x_{2d} A) B^\top$ \COMMENT{Low-rank perturbation}
\IF{clamp is not None}
    \STATE $\delta \gets \text{clamp}(\delta, -c, c)$
\ENDIF
\STATE \RETURN $y_{base} + pol.\gamma \cdot \delta$
\end{algorithmic}
\end{algorithm}

\subsection{Adaptive Policy Decision}

\begin{algorithm}[H]
\caption{VPSPolicy.decide(h)}
\label{alg:policy_full}
\begin{algorithmic}[1]
\REQUIRE Hidden activations $h \in \R^{N \times d}$, configuration bounds
\STATE $E \gets \|h\|_F^2 / (N \cdot d)$ \COMMENT{Batch energy}
\STATE $\sigma \gets 1 - e^{-E}$ \COMMENT{Saturating nonlinearity}
\IF{token entropy $\mathcal{H}$ is available}
    \STATE $\sigma \gets \max(\sigma, \min(1, \mathcal{H}/3))$
\ENDIF
\STATE Adjust $\sigma$ based on improvement history
\STATE $r \gets r_{lo} + \lfloor(r_{hi} - r_{lo}) \cdot \sigma\rfloor$
\STATE $\gamma \gets \gamma_{lo} + (\gamma_{hi} - \gamma_{lo}) \cdot \sigma$
\STATE $k \gets k_{lo} + \lfloor(k_{hi} - k_{lo}) \cdot \sigma\rfloor$
\STATE \RETURN LayerPolicy$(r, \gamma, k, \ldots)$
\end{algorithmic}
\end{algorithm}

\section{Configuration Parameters}
\label{app:config}

\begin{table}[H]
\centering
\caption{VPS Configuration Parameters}
\label{tab:config}
\begin{tabular}{@{}llp{6cm}@{}}
\toprule
Parameter & Default & Description \\
\midrule
\texttt{rank} & 2 & Base rank of low-rank perturbation \\
\texttt{topk} & 32 & Number of features selected by SK builder \\
\texttt{gamma} & 0.5 & Perturbation scaling coefficient \\
\texttt{tau} & 0.8 & Spectral norm clip threshold \\
\texttt{builder} & hybrid & Builder type: sk, sc, or hybrid \\
\texttt{order} & 1 & Order of delta expansion \\
\texttt{qk\_coupling} & True & Enable Q/K pairing in attention \\
\texttt{lbfgs\_enabled} & True & Enable ephemeral L-BFGS \\
\texttt{adaptive\_rank} & True & Enable energy-based rank adaptation \\
\texttt{adaptive\_gamma} & True & Enable energy-based gamma adaptation \\
\texttt{alpha} & $10^{-3}$ & Ridge regularization for SC builder \\
\texttt{rank\_bounds} & $[1, 4]$ & Adaptive rank range \\
\texttt{gamma\_bounds} & $[0.3, 0.8]$ & Adaptive gamma range \\
\texttt{topk\_bounds} & $[16, 64]$ & Adaptive top-k range \\
\bottomrule
\end{tabular}
\end{table}

\section{Code Availability}
\label{app:code}

The complete implementation of VPS is available at:
\begin{center}
\url{https://github.com/Saba-Kublashvili/vps-virtual-parameter-synthesis}
\end{center}

The repository includes:
\begin{itemize}[leftmargin=*,itemsep=1pt]
    \item \texttt{vps/vpscore/}: Core VPS implementation
    \item \texttt{vps/scripts/}: Evaluation and ablation scripts
    \item \texttt{vps/configs/}: Configuration files
    \item Google Colab notebooks for reproducible experiments
\end{itemize}

\end{document}